\documentclass{article}

\PassOptionsToPackage{numbers, compress}{natbib}
\usepackage[main, final, preprint]{neurips_2026}

\usepackage[utf8]{inputenc}
\usepackage[T1]{fontenc}
\usepackage{hyperref}
\usepackage{url}
\usepackage{booktabs}
\usepackage{amsfonts}
\usepackage{amsmath}
\usepackage{amssymb}
\usepackage{amsthm}
\usepackage{nicefrac}
\usepackage{microtype}
\usepackage{xcolor}
\usepackage[ruled,vlined,linesnumbered]{algorithm2e}
\usepackage{graphicx}
\usepackage{multirow}
\usepackage{subcaption}
\usepackage{mathtools}

\newtheorem{theorem}{Theorem}
\newtheorem{proposition}{Proposition}
\newtheorem{corollary}{Corollary}
\newtheorem{definition}{Definition}
\newtheorem*{remark}{Remark}

\title{GeoSPRINT: Geometric Redundancy-Aware Step Pruning for Inference in Diffusion Trajectories}

\author{
  Arpita Joshi\\
  The Scripps Research Institute\\
  10550 N. Torrey Pines Road, La Jolla, San Diego CA, USA\\
  \textit{arpitamhow@gmail.com / ajoshi@scripps.edu}
}

\begin{document}

\maketitle
\vspace*{-0.7cm}
\begin{abstract}
	\vspace*{-0.3cm}
Diffusion models achieve high sample quality but remain expensive at inference time because sampling requires many sequential neural function evaluations (NFEs). Existing acceleration methods either use fixed step-skipping schedules, adapt step sizes based on local numerical error, or require additional training. We introduce GeoSPRINT (Geometric Step Pruning for Inference in Trajectories), a training-free framework for constructing non-uniform sampling schedules from the geometry of denoising trajectories. GeoSPRINT detects geometrically redundant steps using a hyperplanarity test in latent space, implemented efficiently via QR factorization, and converts the resulting redundancy profile into a sampling schedule that allocates more steps to high-curvature regions of the trajectory. In addition, we introduce the trajectory projection score $\alpha_{\mathrm{traj}}$, a residual-variance metric that quantifies trajectory straightness and serves as a model-free diagnostic for rectified flow quality. Across CIFAR-10 ($32{\times}32$), LSUN Church ($256{\times}256$), and Stable Diffusion v1.5 ($512{\times}512$ latent), GeoSPRINT consistently improves over uniform DDIM (Denoising Diffusion Implicit Models) schedules at matched NFE budgets. On CIFAR-10, GeoSPRINT improves FID (Fréchet Inception Distance) by 0.7-1.1 over DDIM across 49-89 NFEs and surpasses DPM-Solver++ at NFE${\geq}30$ despite using a first-order DDIM solver. On LSUN Church, it reduces FID from 1.48 to 1.26 at 52 steps, and on Stable Diffusion v1.5 it achieves up to 1.93 FID improvement over DDIM. These results show that trajectory geometry provides a useful global signal for allocating inference steps and that schedule quality can substantially improve diffusion sampling efficiency without retraining.
\end{abstract}
\vspace*{-0.4cm}
\section{Introduction}
\label{sec:intro}
\vspace*{-0.3cm}
Diffusion models~\citep{ho2020denoising, song2021scorebased} and their continuous-time generalizations through score-based stochastic differential equations have become the dominant generative modeling paradigm across images~\citep{rombach2022high}, video~\citep{ho2022video}, audio~\citep{kong2021diffwave}, and molecular design~\citep{hoogeboom2022equivariant}. Despite their exceptional generation quality, the iterative nature of the reverse sampling process remains a computational bottleneck: generating a single sample typically requires tens to thousands of sequential neural network evaluations.

The community has responded with a rich ecosystem of acceleration techniques. Training-free approaches include DDIM~\citep{song2021denoising}, which reinterprets diffusion sampling as a deterministic ODE and enables uniform step-skipping; DPM-Solver~\citep{lu2022dpmsolver} and DPM-Solver++~\citep{lu2023dpmsolverpp}, which employ exponential integrators with adaptive step sizes based on local truncation error; and UniPC~\citep{zhao2023unipc}, which unifies predictor-corrector schemes. Training-based approaches include progressive distillation~\citep{salimans2022progressive}, which iteratively halves the required steps; consistency models~\citep{song2023consistency, songlu2025consistency}, which learn to map any point on the trajectory directly to the endpoint; and rectified flow~\citep{liu2023flow}, which straightens transport paths to reduce integration steps.

Despite this progress, a gap remains: existing training-free methods lack a \emph{global}, trajectory-level criterion for step importance grounded in geometric redundancy. DDIM skips steps uniformly, blind to trajectory geometry. DPM-Solver adapts step sizes based on \emph{local} error estimates at individual timesteps. Adaptive non-uniform timestep sampling~\citep{kim2025adaptive} focuses on \emph{training} rather than inference. The recent SDM framework~\citep{jo2026sdm} analyzes local ODE stiffness but still operates pointwise. None of these methods ask the global question: \emph{across the full trajectory, which steps actually contribute new geometric information to the path from noise to data?}

We argue that this is precisely the question addressed by geometric data instance reduction~\citep{joshi2020instance}. That work demonstrated that ordered sequences of data points can be dramatically reduced by testing geometric redundancy---collinearity in 2D, coplanarity in 3D---while preserving the essential shape and variance of the structure, as quantified by a projection score based on eigenvalues of the covariance matrix of removed points~\citep{fontes2011projection}. The key insight is that many consecutive points in an ordered dataset lie approximately on the same linear or planar subspace and can be removed without information loss.

A denoising trajectory $\{z_T, z_{T-1}, \ldots, z_0\}$ is precisely such an ordered sequence. In regions where the score function changes slowly, consecutive latent states trace a nearly linear path---these steps are geometrically redundant. In regions of rapid change (e.g., when the model resolves fine structure), the trajectory curves sharply, and every step carries new directional information. This analogy motivates GeoSPRINT.

\textbf{Contributions:} \textbf{(1)}~We generalize geometric instance reduction from 2--3 PCA dimensions to arbitrary $d$-dimensional latent spaces via a \emph{hyperplanarity test} with $O(d \cdot k^2)$ per-step complexity. \textbf{(2)}~We introduce a novel schedule construction method that blends log-SNR spacing with GeoSPRINT's trajectory curvature density, yielding non-uniform schedules that outperform both uniform-$t$ (DDIM) and uniform-logSNR spacing. \textbf{(3)}~We introduce the \emph{trajectory projection score} $\alpha_{\mathrm{traj}}$, a residual-variance metric that quantifies trajectory non-straightness; we prove it equals zero for perfectly straight trajectories and demonstrate a 450$\times$ drop from DDPM to DDIM, establishing it as a training-free diagnostic for flow-matching quality. \textbf{(4)}~We demonstrate that a first-order DDIM solver with GeoSPRINT scheduling outperforms the second-order DPM-Solver++ at NFE${\geq}30$ on CIFAR-10, showing that \emph{where} to step matters more than \emph{how} to step. \textbf{(5)}~We provide benchmarks on CIFAR-10 ($32{\times}32$), LSUN Church ($256{\times}256$), and Stable Diffusion v1.5 ($512{\times}512$ latent), with consistent improvements across resolutions, model architectures, and conditioning types.

\section{Background and Related Work}
\label{sec:background}
\vspace*{-0.1cm}
\subsection{Diffusion Models and the Sampling Problem}
\vspace*{-0.2cm}
\textbf{Score-based SDE framework.} \citet{song2021scorebased} unified diffusion models under a continuous-time SDE framework. The forward process is $\mathrm{d}z = f(z, t)\,\mathrm{d}t + g(t)\,\mathrm{d}w$, where $f$ and $g$ define the drift and diffusion coefficients. The reverse process is:
\begin{equation}
\label{eq:reverse-sde}
\mathrm{d}z = \bigl[f(z, t) - g(t)^2 \nabla_z \log p_t(z)\bigr]\mathrm{d}t + g(t)\,\mathrm{d}\bar{w},
\end{equation}
where $\nabla_z \log p_t(z)$ is the score function approximated by a neural network $s_\theta(z, t)$ (and $\bar{w}$ is the standard Weiner process). An equivalent deterministic formulation, the probability flow ODE is:
\begin{equation}
\label{eq:pf-ode}
\frac{\mathrm{d}z}{\mathrm{d}t} = f(z, t) - \tfrac{1}{2}g(t)^2 \nabla_z \log p_t(z).
\end{equation}
Sampling reduces to solving this ODE from $t{=}T$ to $t{=}0$, requiring discretization into $N$ steps, each involving one neural function evaluation (NFE).
\textbf{Flow matching:} \citet{lipman2023flow} and \citet{liu2023flow} proposed learning a velocity field $v_\theta(z, t)$ that transports a source distribution to a target via $\mathrm{d}z/\mathrm{d}t = v_\theta(z, t)$. Rectified flow~\citep{liu2023flow} straightens these paths through iterative reflow. \citet{wang2024rectified} showed that strict straightness is not necessary---first-order ODE consistency suffices.
\vspace*{-0.1cm}
\subsection{Existing Acceleration Methods}
\vspace*{-0.2cm}
\textbf{Uniform step-skipping.} DDIM~\citep{song2021denoising} enables deterministic sampling that skips timesteps uniformly, reducing 1000 steps to ${\sim}50$, but is blind to trajectory geometry.\\
\textbf{High-order ODE solvers.} DPM-Solver~\citep{lu2022dpmsolver} derives exact solutions for the linear part of the diffusion ODE and applies exponential integrators for the nonlinear residual, achieving high quality in 10--20 steps. DPM-Solver++~\citep{lu2023dpmsolverpp} extends this to guided sampling. Both use \emph{local} truncation error for adaptive step sizing with heuristic schedules (logSNR-uniform, time-uniform, time-quadratic).\\
\textbf{Training-based methods.} Progressive distillation~\citep{salimans2022progressive} trains a student to match a teacher using $N/2$ steps, iteratively halving until 4-step generation. Consistency models~\citep{song2023consistency, songlu2025consistency} learn direct trajectory-endpoint mappings. Consistency flow matching~\citep{yang2024consistency} combines consistency training with flow matching. All require additional training for each target NFE budget.\\
\textbf{Adaptive timestep methods.} \citet{kim2025adaptive} propose adaptive non-uniform timestep sampling for \emph{training} acceleration. The SDM framework~\citep{jo2026sdm} analyzes local PF-ODE stiffness to set adaptive solver order and step size during \emph{inference}. Both operate on local properties at individual timesteps.
\vspace*{-0.1cm}
\subsection{Geometric Instance Reduction}
\vspace*{-0.2cm}
\citet{joshi2020instance} introduced a data instance reduction algorithm that operates on ordered sequences projected into PCA space. The algorithm tests geometric redundancy via collinearity (2D) and coplanarity (3D) tests, removing points that lie approximately on the same linear subspace as their neighbors. Information loss is quantified by a \emph{projection score}~\citep{fontes2011projection}:
\begin{equation}
\label{eq:projection-score-orig}
\alpha = \frac{\sum_{j=1}^{p} \lambda_j^{(S)}}{\sum_{j=1}^{p} \lambda_j^{(\mathcal{Z})}},
\end{equation}
where $\lambda_j^{(S)}$ and $\lambda_j^{(\mathcal{Z})}$ are the $j$-th eigenvalues of the covariance matrices computed from the removed subset $S$ and the full dataset $\mathcal{Z}$, respectively, and $p$ is the number of retained PCA dimensions. The algorithm achieves 40--87\% point reduction across diverse datasets (molecular trajectories, images, ML benchmarks) with projection scores as low as $10^{-5}$.\\
\textbf{Critical limitation for our purposes:} the original algorithm operates in 2--3 PCA dimensions. Diffusion latent spaces are typically 32--512 dimensional. Section~\ref{sec:method} addresses this directly.

\section{Method: GeoSPRINT}
\label{sec:method}
\vspace*{-0.1cm}
\subsection{Generalizing Geometric Redundancy to High Dimensions}
\vspace*{-0.2cm}
\label{sec:hyperplanarity}

Let $\{z_0, z_1, \ldots, z_N\}$ be an ordered sequence of points in $\mathbb{R}^d$ (latent states along a denoising trajectory, ordered from $t{=}T$ to $t{=}0$). We seek to identify and remove points approximately contained in the affine subspace spanned by their neighbors.

\begin{definition}[Hyperplanarity test]
\label{def:hyperplanarity}
Given a window of $k$ \emph{retained} points $W = \{w_1, w_2, \ldots, w_k\}$ (where $k \leq d$) and a candidate point $z_i$, define:
\begin{equation}
\label{eq:subspace-matrix}
M_W = \bigl[w_2 - w_1 \;\big|\; w_3 - w_1 \;\big|\; \cdots \;\big|\; w_k - w_1\bigr] \in \mathbb{R}^{d \times (k-1)}.
\end{equation}
The residual distance of $z_i$ from the affine subspace spanned by $W$ is:
\begin{equation}
\label{eq:residual}
r_i = \bigl\|(I - M_W M_W^{+})(z_i - w_1)\bigr\|_2,
\end{equation}
where $M_W^{+}$ is the Moore-Penrose pseudoinverse. If $r_i < \tau$, then $z_i$ is approximately in the affine span and is a candidate for removal.
\end{definition}

Note that Definition~\ref{def:hyperplanarity} uses the most recently retained (not necessarily consecutive) points as the reference window, matching the causal sliding-window logic of Algorithm~\ref{alg:prune}.\\
\textbf{Efficient computation via QR factorization.} Rather than computing the pseudoinverse directly, we compute the thin QR factorization $M_W = Q_W R_W$ and obtain $r_i = \|(I - Q_W Q_W^\top)(z_i - w_1)\|_2$. Computing the QR factorization of $M_W \in \mathbb{R}^{d \times (k-1)}$ costs $O(d \cdot k^2)$ operations; since $k$ is small (typically $k{=}2$), this is $O(d)$ in practice.\\
\textbf{Adaptive window size.} We propose a progressive hierarchy mirroring the 2D$\to$3D progression in the original algorithm:
Tests are applied in a progressive hierarchy: \textbf{Level~1} ($k{=}2$, 
collinearity) catches long straight segments; \textbf{Level~2} ($k{=}3$, 
coplanarity) test if 4 points are coplanar, catches planar sweeps; \textbf{Level~$\ell$} ($k{=}\ell$) tests 
against an $(\ell{-}1)$-dimensional affine subspace.
These tests are applied sequentially: Level~1 first, then Level~2 on surviving points, etc. In practice, we find $k{=}2$ (collinearity) sufficient for all models tested (see Section~\ref{sec:hyperparams}).\\
\textbf{Threshold determination.} Following~\citet{joshi2020instance}, we determine $\tau$ through binary search guided by the trajectory projection score (Section~\ref{sec:projection-score}), adjusting until the projection score of removed points falls below a target (e.g., $10^{-3}$) while maximizing the number of removed steps.
\vspace*{-0.15cm}
\subsection{Application to Denoising Trajectories}
\label{sec:application}
\vspace*{-0.2cm}
\textbf{Reference trajectory generation.} Given a pretrained diffusion or flow model, we first generate $B$ reference trajectories using the full $N$-step schedule. Each trajectory $\{z_T^{(b)}, \ldots, z_0^{(b)}\}$ provides an ordered sequence in~$\mathbb{R}^d$.\\
\textbf{Per-trajectory pruning.} For each reference trajectory, we apply the hyperplanarity test (Algorithm~\ref{alg:prune}) to identify and remove geometrically redundant timesteps, producing a per-trajectory retained set $\mathcal{R}^{(b)} \subseteq \{t_0, t_1, \ldots, t_T\}$.\\
\textbf{Universal schedule aggregation.} To obtain a fixed schedule usable without per-sample overhead, we aggregate across reference trajectories:
\begin{equation}
\label{eq:retention-freq}
w(t) = \frac{1}{B} \sum_{b=1}^{B} \mathbf{1}\bigl[t \in \mathcal{R}^{(b)}\bigr].
\end{equation}
The retention frequency $w(t)$ encodes a per-timestep \emph{curvature density}: it is high where trajectories consistently curve and low where they are consistently straight.\\
\textbf{Schedule construction via log-SNR curvature blending.}
A naive approach selects the top-$K$ timesteps by retention frequency $w(t)$. However, this clusters steps in high-curvature regions while leaving large gaps elsewhere, and the DDIM solver's prediction error grows with the gap in log-signal-to-noise ratio (log-SNR) between consecutive steps, not with the gap in $t$. We therefore construct schedules by blending two density functions:
\begin{equation}
\label{eq:blend}
\rho(t) = (1 - \beta) \cdot \rho_{\mathrm{logSNR}}(t) + \beta \cdot \rho_{\mathrm{curv}}(t),
\end{equation}
where $\rho_{\mathrm{logSNR}}$ is uniform density in log-SNR space (ensuring uniform error contribution per step, as in DPM-Solver), $\rho_{\mathrm{curv}}(t) \propto \tilde{w}(t)$ is the smoothed and floored curvature density derived from Eq.~\ref{eq:retention-freq}, and $\beta \in [0, 1]$ is the blend weight. Smoothing applies a Gaussian kernel with bandwidth $\sigma{=}5$ timesteps, followed by flooring at $\epsilon_{\mathrm{floor}} = 0.05 \cdot \max_t \tilde{w}(t)$ to ensure minimum coverage. We then place $K$ steps at uniform quantiles of the cumulative distribution function $F(t) = \int_0^t \rho(s)\,\mathrm{d}s \,/\, \int_0^T \rho(s)\,\mathrm{d}s$, analogous to importance sampling.
Setting $\beta{=}0$ recovers log-SNR-uniform spacing (a known baseline). Setting $\beta{=}1$ gives pure curvature-weighted spacing (which fails at low NFE due to coverage gaps). We find $\beta{=}0.6$ is optimal on CIFAR-10 and effective across the tested datasets (Section~\ref{sec:blend-ablation}), though per-dataset ablation on LSUN Church and SD~v1.5 would further validate robustness. The key insight is that \emph{neither} spacing alone is optimal: log-SNR spacing ignores trajectory geometry, while curvature spacing ignores solver error characteristics. The blend combines both signals.\\
\textbf{Sample-adaptive analysis.} The universal schedule applies the same timestep selection to all samples. To understand how geometric complexity varies \emph{across} samples, we additionally analyze each reference trajectory individually: for a given trajectory, the number of retained timesteps after pruning indicates that sample's geometric complexity. This per-sample analysis does \emph{not} directly reduce NFEs at inference time---the model evaluation must still be performed to obtain each $z_t$---but it serves two important purposes: (i)~it reveals the distribution of sample difficulty across a dataset (Section~\ref{sec:supporting}), informing whether a single universal schedule is sufficient or whether future work on predictive per-sample budgeting is warranted; and (ii)~it provides per-sample $\alpha_{\mathrm{traj}}$ values that can be correlated with sample quality metrics.\\
\textbf{Remark on inference cost.} The primary acceleration mechanism of GeoSPRINT is the \emph{universal schedule} (Algorithm~\ref{alg:universal}), which extracts a fixed set of timesteps offline and then generates all new samples using only those timesteps. The per-sample analysis described above is an offline diagnostic tool, not an inference-time accelerator.

\vspace*{-0.2cm}
\subsection{Trajectory Projection Score}
\label{sec:projection-score}
\vspace*{-0.2cm}
We extend the projection score of~\citet{joshi2020instance} to denoising trajectories. The key insight is that the \emph{residual variance}---the variance orthogonal to the local affine subspace at each pruned point---is the correct measure of information loss, not the raw variance of the pruned points themselves.

\begin{definition}[Trajectory projection score]
\label{def:tps}
Given a trajectory $\mathcal{Z} = \{z_T, \ldots, z_0\}$ with $N{+}1$ points, let $S \subset \mathcal{Z}$ be the set of pruned points, and let $r_s$ denote the hyperplanarity residual (Eq.~\ref{eq:residual}) of each pruned point $z_s \in S$. The trajectory projection score is:
\begin{equation}
\label{eq:tps}
\alpha_{\mathrm{traj}} = \frac{\sum_{s \in S} r_s^2}{\sum_{z \in \mathcal{Z}} \|z - \bar{z}\|^2},
\end{equation}
where $\bar{z}$ is the global trajectory mean. The numerator is the total squared residual of all pruned points: the variance that lies outside the affine span of their retained neighbors. The denominator is the total trajectory variance.
\end{definition}

$\alpha_{\mathrm{traj}}$ measures the fraction of total trajectory variance that is \emph{lost} by pruning---specifically, the variance orthogonal to what the retained points can reconstruct through local affine interpolation. A low score (e.g., $10^{-3}$ to $10^{-5}$) indicates that removed steps contributed negligibly to the trajectory's geometric structure beyond what is already captured by their neighbors.

\begin{proposition}
\label{prop:straight}
For a perfectly straight trajectory (all points on a single line in $\mathbb{R}^d$), GeoSPRINT with the collinearity test ($k{=}2$) prunes all points beyond the initial window, retaining exactly the first $k$ points, with $\alpha_{\mathrm{traj}} = 0$.
\end{proposition}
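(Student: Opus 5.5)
The plan is to follow the causal sliding-window pruning of Algorithm~\ref{alg:prune} with $k=2$ and show, by induction over the candidates, that every candidate after the initial window is pruned. Write the trajectory as $z_j = a + s_j u$ for $j=0,\dots,N$, with $u\in\mathbb{R}^d$ a unit vector and $s_j\in\mathbb{R}$. We assume the two initial retained points $w_1=z_0$, $w_2=z_1$ are distinct, i.e.\ $s_0\neq s_1$. The degenerate case of coincident points is handled separately: the residual is then trivially zero under the pseudoinverse convention, since $M_W=0$ gives $r_i=\|z_i-w_1\|$. That quantity need not vanish, so we must either assume non-degeneracy or note that a constant trajectory makes all $r_i=0$.

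\textbf{Step 1 (residual vanishes).} With $W=\{w_1,w_2\}$, the matrix $M_W = w_2-w_1 = (s_1-s_0)u$ has rank one. Hence $M_W M_W^{+} = Q_WQ_W^\top = uu^\top$. For any candidate $z_i$ we have $z_i-w_1=(s_i-s_0)u$, so $(I-uu^\top)(z_i-w_1)=0$ and $r_i=0<\tau$ for every threshold $\tau>0$. In particular, this holds for whatever $\tau$ the binary search of Section~\ref{sec:hyperplanarity} returns, because that search only uses strictly positive thresholds.

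\textbf{Step 2 (induction over the sliding window).} The retained window never changes, because Algorithm~\ref{alg:prune} updates the window only when a candidate is retained. The induction hypothesis is that after processing $z_2,\dots,z_{i-1}$ the retained set is still $\{z_0,z_1\}$. Then $z_i$ is tested against the same $W$, Step~1 gives $r_i=0<\tau$, and $z_i$ is pruned. The hierarchy is applied only at Level~1 by hypothesis ($k=2$), so no higher-level test can reinstate points. Consequently exactly the first $k=2$ points are retained and $S=\{z_2,\dots,z_N\}$. If the algorithm also forcibly keeps the endpoint $z_N$, the claim must be read modulo that convention. I would check the algorithm's exact bookkeeping here, since this is the main place the statement could be off.

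\textbf{Step 3 (score).} Every $s\in S$ has $r_s=0$, so the numerator of Eq.~\ref{eq:tps} is $0$. The denominator $\sum_j\|z_j-\bar z\|^2=\sum_j (s_j-\bar s)^2$ is strictly positive because $s_0\neq s_1$. Therefore $\alpha_{\mathrm{traj}}=0$.

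The main obstacle is not the linear algebra but pinning down the algorithmic conventions. These include whether the window is reference-fixed or slides among retained points, whether the last point is kept, and the strict inequality $r_i<\tau$ with $\tau>0$. The first thing I would do is state these conventions explicitly as hypotheses inherited from Algorithm~\ref{alg:prune}.
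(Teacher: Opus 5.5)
Your proposal is correct and follows the paper's own argument. Every $z_i - w_1$ lies in the span of $w_2 - w_1$, so $r_i = 0 < \tau$; the else-branch of Algorithm~\ref{alg:prune} therefore never fires, the retained set is the initial window, and the numerator of $\alpha_{\mathrm{traj}}$ vanishes.

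Your extra hypotheses $z_0 \neq z_1$ and $\tau > 0$ are used only implicitly by the paper, and they are genuinely needed: when $M_W = 0$ one gets $r_i = \|z_i - w_1\|$, not zero, so drop the phrase calling that case ``trivially zero''. Your worry about endpoint retention is moot, since Algorithm~\ref{alg:prune} never forcibly keeps $z_N$.
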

\vspace*{-0.6cm}
\begin{proof}
If all points are collinear, every candidate point $z_i$ (for $i \geq k$) lies exactly on the line through the retained window points, so the hyperplanarity residual $r_i = 0$ for all such $i$. Because all residuals satisfy $r_i < \tau$, the window never updates (the else branch of Algorithm~\ref{alg:prune} is never executed), and the retained set consists of the first $k$ points used to initialize the window. The numerator $\sum r_i^2 = 0$, hence $\alpha_{\mathrm{traj}} = 0$.
\end{proof}
\vspace*{-0.4cm}
\begin{corollary}
$\alpha_{\mathrm{traj}}$ is a measure of trajectory non-straightness. For rectified flow models, which aim to straighten trajectories, $\alpha_{\mathrm{traj}}$ decreases with each round of reflow, providing a training-free diagnostic for rectification quality.
\end{corollary}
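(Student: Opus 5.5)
The corollary combines a rigorous claim, that $\alpha_{\mathrm{traj}}$ measures non-straightness, with an empirical one, that it decreases under reflow. I would prove the first and reduce the second to a known property of rectified flow. The only earlier result needed is Proposition~\ref{prop:straight}, which gives one direction: straight trajectories yield $\alpha_{\mathrm{traj}}=0$.

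For the converse, fix the collinearity test ($k{=}2$) and a threshold $\tau$ small compared with the trajectory's spread. Suppose the trajectory is not contained in a line.
\begin{itemize}
\item Let $i$ be the first index at which $z_i$ leaves the line through the current window.
\item Then $r_i>0$ by Definition~\ref{def:hyperplanarity}.
\item If $r_i<\tau$, then $z_i$ is pruned and contributes $r_i^2>0$ to the numerator of Eq.~\ref{eq:tps}.
\item If $r_i\ge\tau$, the window updates. Any later point that is pruned contributes its own positive residual unless it lies exactly on the new line.
\end{itemize}
So, away from degenerate configurations, $\alpha_{\mathrm{traj}}=0$ exactly when every pruned point lies on its window's line. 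This makes $\alpha_{\mathrm{traj}}$ a straightness detector.

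To make it a \emph{measure}, I would bound it quantitatively. For $k{=}2$ the residual is the distance from $z_s$ to the line through two earlier retained points. For a smooth trajectory $z(t)$ with spacing $h$ and $m$ skipped steps, Taylor expansion gives $r_s \lesssim \tfrac12 (mh)^2 \sup\|\ddot z_\perp\|$, where $\ddot z_\perp$ is the component of the acceleration normal to the velocity. Summing these bounds gives
\[
\alpha_{\mathrm{traj}} \le C\,\frac{\int \|\ddot z_\perp(t)\|^2\,\mathrm{d}t}{\operatorname{Var}(\mathcal{Z})},
\]
so the score is controlled by curvature and vanishes with it.

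For the reflow claim, I would use the result of \citet{liu2023flow} that each reflow round does not increase the convex transport cost. I would also use their straightness functional
\[
S(Z)=\int_0^1 \mathbb{E}\,\|Z_1-Z_0-\dot Z_t\|^2\,\mathrm{d}t,
\]
which tends to zero along the reflow sequence. The strategy is to compare $\alpha_{\mathrm{traj}}$ with $S$:
\begin{itemize}
\item $\dot Z_t-(Z_1-Z_0)$ is the deviation from the chord.
\item The perpendicular distance to the chord line is an integral of this deviation.
\item Cauchy--Schwarz then bounds $\sum_s r_s^2$ by a constant times $S$.
\end{itemize}
This yields $\mathbb{E}\,\alpha_{\mathrm{traj}}\to 0$ and an upper bound that is nonincreasing along reflow.

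The main obstacle is that strict monotonicity of $\alpha_{\mathrm{traj}}$ itself does not follow. The score depends on the discrete, threshold-dependent pruning set $S$, and on a denominator that also changes under reflow. Liu et al.\ only guarantee monotonicity of transport cost, with $S$ decaying on average like $O(1/K)$ after $K$ rounds, not round by round.

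I would therefore first attempt the statement in expectation and for the bound, with a fixed $\tau$. I would state the per-round decrease for $\alpha_{\mathrm{traj}}$ itself as an empirical observation (e.g.\ the DDPM$\to$DDIM drop) rather than a theorem. If per-round monotonicity is needed, I would try to show that the pruning set is monotone in curvature for fixed $\tau$. I expect that step to fail in general.
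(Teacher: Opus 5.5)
The paper contains no proof of this corollary. It sits after Proposition~\ref{prop:straight}, whose only content is ``straight $\Rightarrow$ $\alpha_{\mathrm{traj}}=0$''. The reflow sentence rests only on Section~\ref{sec:exp3}, which compares DDPM, DDIM and DPM-Solver++ sampling of one CIFAR-10 network; no rectified-flow model or reflow round appears anywhere. Declining to claim per-round monotonicity is therefore more accurate than the paper. The step you do claim, however, fails: the converse that makes $\alpha_{\mathrm{traj}}$ a straightness detector. ``$\alpha_{\mathrm{traj}}=0$ iff every pruned point lies on its window's line'' is a tautology, and curved trajectories satisfy it whenever no positive-residual point happens to be pruned. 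Concretely, sample a circular arc of radius $R$ at angular step $\theta$. With $k{=}2$, each candidate has residual $2R\sin(\theta/2)\sin\theta$ relative to the line through its two predecessors. So for $\tau$ below this value, Algorithm~\ref{alg:prune} retains every point, nothing is pruned, and $\alpha_{\mathrm{traj}}=0$. Your assumption that $\tau$ is small relative to the spread is exactly the wrong regime: Theorem~\ref{thm:variance} gives $\alpha_{\mathrm{traj}}\le |S|\tau^2/((N{+}1)\,\mathrm{tr}(C_{\mathcal{Z}}))\to 0$ as $\tau\to 0$, for every trajectory. The same threshold effect makes the score non-monotone in curvature at fixed $\tau$. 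A mildly curved path has positive sub-threshold residuals that are pruned and counted. In a strongly curved path every off-line point is retained and contributes nothing. Hence whenever a reflow round turns a strongly curved trajectory into a mildly curved one, $\alpha_{\mathrm{traj}}$ rises from $0$ to a positive value. The per-round decrease is therefore false in general, not merely unproved, and your bound $\alpha_{\mathrm{traj}}\le C\int\|\ddot z_\perp\|^2\,\mathrm{d}t/\mathrm{Var}(\mathcal{Z})$ is intrinsically one-sided. Under the paper's own protocol, where $\tau$ is tuned by binary search until $\alpha_{\mathrm{traj}}\approx\alpha_{\mathrm{target}}$, the score is pinned by construction and carries no information about the trajectory.

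What can be salvaged is narrower. A correct converse must involve the number of pruned points. For $k{=}2$ and $z_0\neq z_1$, $\alpha_{\mathrm{traj}}=0$ together with $|S|=N{-}1$ holds if and only if the trajectory is collinear: maximal pruning means the window never moves, so every residual is the distance to the line through $z_0$ and $z_1$. A usable diagnostic therefore pairs $\alpha_{\mathrm{traj}}$ with the prune rate, as Table~\ref{tab:blend} implicitly does. Your upper-bound route through the straightness functional $S(Z)$ of Liu et al.\ can give ``small straightness functional implies small $\alpha_{\mathrm{traj}}$'', but it needs two repairs.
First, Cauchy--Schwarz controls the distance to the global chord through $Z_0$ and $Z_1$. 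But $r_s$ is the distance to the line through the two most recently retained points, which can be tilted relative to that chord. Transferring the bound costs a factor of order (extrapolation length)/(window spacing), up to $O(N)$. It also requires a pathwise lower bound on $\|Z_1-Z_0\|$ before you take expectations of the ratio.
Second, $S(Z^k)$ is not known to be monotone in $k$, so $C\,S(Z^k)$ is not a nonincreasing bound. Use instead $S(Z^{k+1})\le c_k-c_{k+1}$ with $c_k=\mathbb{E}\|Z^k_1-Z^k_0\|^2$, the inequality behind their $O(1/K)$ straightness bound. Summing it gives the nonincreasing envelope $S(Z^{k})\le c_{k-1}-c_\infty\to 0$.
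Finally, the DDPM$\to$DDIM drop is not evidence for the reflow claim even empirically. It contrasts stochastic ancestral sampling, whose $\alpha_{\mathrm{traj}}$ is dominated by the injected noise, with the deterministic sampler of the same model, not successive rectifications.
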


\vspace*{-0.2cm}
\subsection{Connections to Existing Methods}
\label{sec:connections}
\vspace*{-0.2cm}
\textbf{Relation to DPM-Solver.} DPM-Solver's adaptive step size is based on \emph{local truncation error}---the difference between a higher-order and lower-order solution at each step. GeoSPRINT's hyperplanarity test is a \emph{geometric} criterion over a \emph{window} of points, making it sensitive to trajectory structure over multiple steps rather than single-step error. The two are complementary: DPM-Solver minimizes local numerical error; GeoSPRINT minimizes global geometric redundancy.\\
\textbf{Relation to SDM.} The SDM framework~\citep{jo2026sdm} analyzes local PF-ODE stiffness to determine solver order at each timestep. GeoSPRINT instead operates on the \emph{realized trajectory} post-hoc, testing whether the path actually traced contains redundant segments. SDM is predictive (deciding before taking the step); GeoSPRINT is retrospective (analyzing after generation). This makes GeoSPRINT applicable to any model without Jacobian access. An empirical comparison of these complementary strategies is an important direction for future work.\\
\textbf{Relation to rectified flow.} Rectified flow learns transport paths that are inherently straight, making all interior steps redundant by construction. GeoSPRINT \emph{diagnoses} rather than \emph{enforces} straightness, and works equally well on curved trajectories by adaptively allocating steps where curvature demands them.\\
\textbf{Relation to consistency models.} Consistency models learn to jump from any trajectory point to the endpoint in one step. GeoSPRINT identifies which intermediate points are \emph{necessary} to maintain trajectory integrity. The two can be combined: GeoSPRINT identifies minimal waypoints, and a consistency model could jump between them.
\vspace*{-0.3cm}
\section{Theoretical Analysis}
\label{sec:theory}
\vspace*{-0.2cm}
\subsection{Reconstruction Error Bound}
\vspace*{-0.2cm}
\begin{theorem}[Reconstruction error bound]
\label{thm:variance}
Let $\mathcal{Z}$ be a trajectory of $N{+}1$ points in $\mathbb{R}^d$, and let $S$ be the set of pruned points after GeoSPRINT with hyperplanarity threshold $\tau$. Then:
\begin{equation}
\label{eq:variance-bound}
\alpha_{\mathrm{traj}} \leq \frac{|S| \cdot \tau^2}{(N{+}1) \cdot \mathrm{tr}(C_{\mathcal{Z}})},
\end{equation}
where $|S|$ is the number of pruned steps and $\mathrm{tr}(C_{\mathcal{Z}})$ is the trace of the sample covariance of $\mathcal{Z}$.
\end{theorem}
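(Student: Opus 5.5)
The plan is to bound the numerator and the denominator of Eq.~\eqref{eq:tps} separately, since the claimed inequality is just the ratio of the two bounds.

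\textbf{Numerator.} By Definition~\ref{def:tps}, each $z_s \in S$ carries the hyperplanarity residual $r_s$ of Eq.~\eqref{eq:residual}. This residual is computed against the retained window $W$ that was current when $z_s$ was tested, exactly as in the proof of Proposition~\ref{prop:straight}. A point enters $S$ only if it passes the test of Definition~\ref{def:hyperplanarity}, i.e.\ only if $r_s < \tau$. Summing over the pruned points gives
\[
\sum_{s\in S} r_s^2 \;\le\; |S|\,\tau^2 .
\]
If the progressive hierarchy is used with several levels, the same bound holds provided all levels share the threshold $\tau$. Each pruned point is removed at exactly one level, and its recorded residual is the one from that level, which is again below $\tau$.

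\textbf{Denominator.} I would fix the sample covariance convention as $C_{\mathcal Z} = \frac{1}{N+1}\sum_{z\in\mathcal Z}(z-\bar z)(z-\bar z)^\top$, with $\bar z$ the global trajectory mean as in Definition~\ref{def:tps}. By linearity of the trace and $\mathrm{tr}(vv^\top) = \|v\|^2$,
\[
\sum_{z\in\mathcal Z}\|z-\bar z\|^2 \;=\; (N{+}1)\,\mathrm{tr}(C_{\mathcal Z}).
\]
This quantity is nonnegative. Dividing the numerator bound by this identity gives Eq.~\eqref{eq:variance-bound}.

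\textbf{Degenerate case and expected obstacle.} When the trajectory is not a single repeated point, $\mathrm{tr}(C_{\mathcal Z})>0$ and the ratio is well defined. If the trajectory is a single repeated point, both sides are undefined, or the case is excluded by convention. The main obstacle is not analytic but definitional, and I would state both choices explicitly.
\begin{itemize}
\item The covariance must be normalized by $N{+}1$. With the unbiased $1/N$ normalization the factor $N{+}1$ in the bound would become $N$.
\item The residual $r_s$ in $\alpha_{\mathrm{traj}}$ must be the one evaluated against the window in force at pruning time, not recomputed afterwards against the final retained set, since only the former is guaranteed to lie below $\tau$.
\end{itemize}
Once these two conventions are fixed in line with Definitions~\ref{def:hyperplanarity} and~\ref{def:tps}, the proof is the two-line computation above.
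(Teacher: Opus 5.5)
Your proposal is correct and follows the paper's proof in Appendix~B essentially line for line: bound the numerator by $|S|\tau^2$ using $r_s<\tau$ for each pruned point, identify the denominator as $(N{+}1)\,\mathrm{tr}(C_{\mathcal Z})$ under the $1/(N{+}1)$ covariance convention, and divide. Your extra remarks on the normalization, on using the pruning-time residuals, and on the degenerate case $\mathrm{tr}(C_{\mathcal Z})=0$ are sensible clarifications that the paper leaves implicit.
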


\begin{proof}[Proof sketch]
Each pruned point $z_s$ satisfies $r_s \leq \tau$ by the hyperplanarity test, so $\sum_{s \in S} r_s^2 \leq |S| \cdot \tau^2$. The denominator of $\alpha_{\mathrm{traj}}$ equals $(N{+}1) \cdot \mathrm{tr}(C_{\mathcal{Z}})$. Division yields the result. The full proof is in Appendix~\ref{app:proofs}.
\end{proof}
This provides a formal guarantee: GeoSPRINT's reconstruction error---the fraction of total trajectory variance attributable to orthogonal residuals of pruned points---is controlled by $\tau$, the pruning rate $|S|/(N{+}1)$, and the total trajectory variance. For a desired $\alpha_{\mathrm{traj}} \leq \epsilon$, it suffices to set $\tau \leq \sqrt{\epsilon \cdot (N{+}1) \cdot \mathrm{tr}(C_{\mathcal{Z}}) / |S|}$.

\begin{remark}
We emphasize that $\alpha_{\mathrm{traj}}$ bounds the \emph{orthogonal reconstruction residual}, not the retained set's sample covariance $\mathrm{tr}(C_R)/\mathrm{tr}(C_{\mathcal{Z}})$. Pruning points changes the sample mean and can reduce the retained sample covariance even when all residuals are zero (e.g., removing interior points from a collinear trajectory reduces sample variance while $\alpha_{\mathrm{traj}} = 0$). The reconstruction interpretation---that pruned points are well-approximated by affine interpolation from their retained neighbors---is the operationally relevant guarantee for schedule quality.
\end{remark}
\vspace*{-0.2cm}
\subsection{Complexity Analysis}
\vspace*{-0.2cm}
The multi-level hyperplanarity test with maximum level $\ell$ has complexity $O(N \cdot d \cdot \ell^2)$ per trajectory in the worst case. For Level-1 only (collinearity, $k{=}2$), this is $O(N \cdot d)$---linear in both trajectory length and dimension, and negligible compared to the cost of the NFEs themselves. In Algorithm~\ref{alg:prune}, the QR factorization is cached and recomputed only when the window updates, further reducing the amortized cost. Schedule amortization over a batch of $M$ samples reduces per-sample overhead to $O(B/M)$ reference trajectories.
\vspace*{-0.2cm}
\subsection{Relation to Trajectory Curvature}
\vspace*{-0.2cm}
The hyperplanarity residual $r_i$ at point $z_i$ relates to discrete curvature. For a smooth trajectory $z(t)$ with a collinearity window ($k{=}2$) using causal forward-extrapolation from retained points at $z(t)$ and $z(t{+}\Delta t)$ to the candidate $z(t{+}2\Delta t)$:
\begin{equation}
\label{eq:curvature-relation}
r_i \approx (\Delta t)^2\left\|z''_\perp(t)\right\|_2 + O\bigl((\Delta t)^3\bigr),
\end{equation}
where $z''_\perp(t)$ is the component of $z''(t)$ orthogonal to the velocity $z'(t)$. Tangential acceleration (speeding up along a straight path) does not contribute, since the residual measures orthogonal distance from the extrapolated line. The coefficient $(\Delta t)^2$ (rather than $(\Delta t)^2/2$) reflects causal forward-extrapolation rather than midpoint interpolation. Removing points with $r_i < \tau$ is thus equivalent to removing steps where the trajectory's orthogonal curvature is small. Higher-level tests detect higher-order geometric structure (torsion, etc.).

\section{Experiments}
\label{sec:experiments}
\vspace*{-0.2cm}
\subsection{Setup}
\vspace*{-0.2cm}
\textbf{Models and datasets.} We evaluate on: \textbf{CIFAR-10} ($32{\times}32$) using the pretrained \texttt{google/ddpm-cifar10-32} UNet from HuggingFace Diffusers; \textbf{LSUN Church} ($256{\times}256$) using \texttt{google/ddpm-ema-church-256}; and \textbf{Stable Diffusion v1.5} ($512{\times}512$, latent $4{\times}64{\times}64$) using \texttt{runwayml/stable-diffusion-v1-5} with classifier-free guidance (CFG scale~7.5). All pixel-space models use the standard linear $\beta$-schedule with 1000 training timesteps.

\textbf{Protocol.} For each model, we record reference trajectories using DDIM at 200 steps: $B{=}100$ for CIFAR-10 and $B{=}50$ for LSUN Church and SD~v1.5 (see Appendix~\ref{app:convergence} for convergence analysis). We then extract the curvature density profile and construct LogSNR+curvature schedules (Eq.~\ref{eq:blend}, $\beta{=}0.6$). For FID computation, we generate 10,000 samples for CIFAR-10 and LSUN Church, and 5,000 samples for SD~v1.5, using a fixed set of prompts randomly sampled from the COCO 2017 validation set. FID is computed against the training set using clean-FID~\citep{parmar2022cleanfid} for CIFAR-10 and LSUN Church. For SD~v1.5, FID is computed against a 50-step DDIM reference set, measuring relative trajectory fidelity rather than absolute generation quality (see Section~\ref{sec:sd-fid}). All sampling uses the manual DDIM update formula to correctly handle non-uniform timestep spacing. All reported FID values are means $\pm$ standard deviation over 3 independent runs with different random seeds.

\textbf{Baselines.} \textbf{DDIM}~\citep{song2021denoising}: uniform timestep spacing at matched NFE. \textbf{DPM-Solver++}~\citep{lu2023dpmsolverpp}: second-order solver with default logSNR-uniform schedule, \texttt{solver\_order=2}. \textbf{DDIM-200}: proxy for zero-pruning baseline.
\vspace*{-0.2cm}
\subsection{Main Result: FID vs.\ NFE Across Datasets}
\label{sec:exp1}
\vspace*{-0.2cm}
GeoSPRINT operates as a \emph{step-budget oracle}: starting from an initial $K$, it searches upward, increasing $K$ until the trajectory projection score $\alpha_{\mathrm{traj}}$ falls below a target threshold. At each $K$, we construct a LogSNR+curvature schedule (Eq.~\ref{eq:blend}, $\beta{=}0.6$) and compare with DDIM uniform spacing at matched NFE. On CIFAR-10, we additionally compare with DPM-Solver++ (second-order, default schedule).
\vspace*{-0.2cm}
\begin{table}[ht]
\hfill
\caption{FID ($\downarrow$) across datasets (mean $\pm$ std over 3 seeds). GeoSPRINT uses LogSNR+curvature ($\beta{=}0.6$). \textbf{Bold}: best per row. CIFAR-10 baseline: DDIM-200 = 12.65. SD FID computed against 50-step DDIM reference (see Section~\ref{sec:sd-fid}).}
\label{tab:main}
\centering
\setlength{\tabcolsep}{3pt}
\scriptsize
\begin{minipage}{0.28\textwidth}
\centering
\textbf{CIFAR-10 ($32{\times}32$)}\\[4pt]
\begin{tabular}{lllll}
\toprule
NFE & Geo & DDIM & $\Delta$ & DPM++\\
\midrule
10 & 38.53\tiny{$\pm$.31} & 34.99\tiny{$\pm$.28} & 3.54 & \textbf{29.05}\tiny{$\pm$.25}\\
20 & 22.23\tiny{$\pm$.19} & 23.17\tiny{$\pm$.21} & $-$0.94 & \textbf{21.31}\tiny{$\pm$.18}\\
30 & \textbf{18.50}\tiny{$\pm$.15} & 19.33\tiny{$\pm$.17} & $-$0.83 & 18.85\tiny{$\pm$.16}\\
49 & \textbf{15.95}\tiny{$\pm$.13} & 16.82\tiny{$\pm$.14} & $-$0.87 & 17.41\tiny{$\pm$.15}\\
60 & \textbf{15.25}\tiny{$\pm$.12} & 15.95\tiny{$\pm$.13} & $-$0.70 & 17.08\tiny{$\pm$.14}\\
70 & \textbf{14.80}\tiny{$\pm$.11} & 15.69\tiny{$\pm$.12} & $-$0.89 & 16.82\tiny{$\pm$.14}\\
80 & \textbf{14.38}\tiny{$\pm$.10} & 15.30\tiny{$\pm$.12} & $-$0.92 & 16.68\tiny{$\pm$.13}\\
89 & \textbf{14.10}\tiny{$\pm$.10} & 15.17\tiny{$\pm$.11} & $-$1.07 & 16.65\tiny{$\pm$.13}\\
\bottomrule
\end{tabular}
\end{minipage}\hfill
\begin{minipage}{0.15\textwidth}
\centering
\textbf{Church ($256{\times}256$)}\\[4pt]
\hspace*{-0.8cm}\begin{tabular}{llll}
\toprule
NFE & Geo & DDIM & $\Delta$ \\
\midrule
52 & \textbf{1.26}\tiny{$\pm$.04} & 1.48\tiny{$\pm$.05} & $-$0.22 \\
56 & \textbf{1.16}\tiny{$\pm$.03} & 2.22\tiny{$\pm$.07} & $-$1.06 \\
74 & \textbf{0.87}\tiny{$\pm$.03} & 1.72\tiny{$\pm$.05} & $-$0.85 \\
79 & \textbf{0.79}\tiny{$\pm$.02} & 1.43\tiny{$\pm$.04} & $-$0.64 \\
\bottomrule
\end{tabular}
\end{minipage}\hfill
\begin{minipage}{0.18\textwidth}
\centering
\textbf{SD v1.5 ($512{\times}512$)}\\[4pt]
\hspace*{-1.5cm}\begin{tabular}{cccc}
\toprule
NFE & Geo & DDIM & $\Delta$ \\
\midrule
15 & \textbf{7.91}\tiny{$\pm$.08} & 8.72\tiny{$\pm$.09} & $-$0.81 \\
20 & \textbf{5.51}\tiny{$\pm$.06} & 5.97\tiny{$\pm$.07} & $-$0.46 \\
24 & \textbf{4.60}\tiny{$\pm$.05} & 5.92\tiny{$\pm$.06} & $-$1.32 \\
29 & \textbf{4.01}\tiny{$\pm$.05} & 4.92\tiny{$\pm$.05} & $-$0.91 \\
34 & \textbf{3.66}\tiny{$\pm$.04} & 4.23\tiny{$\pm$.05} & $-$0.57 \\
39 & \textbf{3.51}\tiny{$\pm$.04} & 4.92\tiny{$\pm$.06} & $-$1.41 \\
44 & \textbf{3.45}\tiny{$\pm$.04} & 5.38\tiny{$\pm$.06} & $-$1.93 \\
\bottomrule
\end{tabular}
\end{minipage}
\end{table}

\textbf{Key findings.} \textbf{(1)}~GeoSPRINT outperforms DDIM at every NFE on all three datasets (except NFE${=}8$ on SD and NFE${=}10$ on CIFAR-10, see Figure~\ref{fig:cifar}(a) and Figure~\ref{fig:sd15}). \textbf{(2)}~Improvements \emph{grow} with model complexity: $-0.7$ to $-1.1$ on CIFAR-10, $-0.2$ to $-1.1$ on Church, and up to $-1.93$ on SD~v1.5 (Table~\ref{tab:main}). \textbf{(3)}~DDIM's uniform schedule becomes actively counterproductive at high NFE on both Church and SD (FID increasing), while GeoSPRINT monotonically improves (Table~\ref{tab:main} and Figure~\ref{fig:sd15}). \textbf{(4)}~On CIFAR-10, first-order DDIM+GeoSPRINT outperforms second-order DPM-Solver++ at NFE${\geq}30$, demonstrating that \emph{schedule quality dominates solver order} at moderate step budgets on this model; extending this comparison to higher-resolution models is left to future work. \textbf{(5)}~GeoSPRINT at 60 steps matches DDIM at 80 steps on CIFAR-10 (both FID${\approx}15.3$), a 25\% NFE reduction.
\vspace*{-0.2cm}
\begin{figure}[ht]
	\centering
	\begin{subfigure}[b]{0.33\textwidth}
		\centering
		\includegraphics[width=\textwidth]{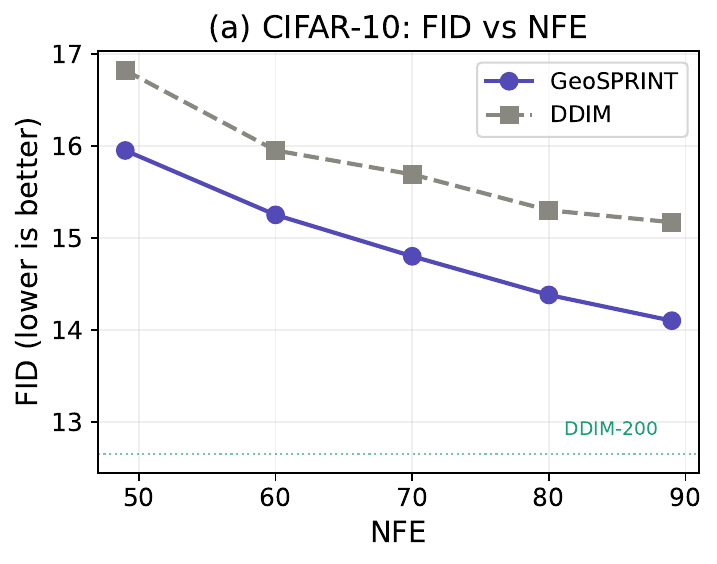}
	\end{subfigure}%
	\hfill
	\begin{subfigure}[b]{0.33\textwidth}
		\centering
		\includegraphics[width=\textwidth]{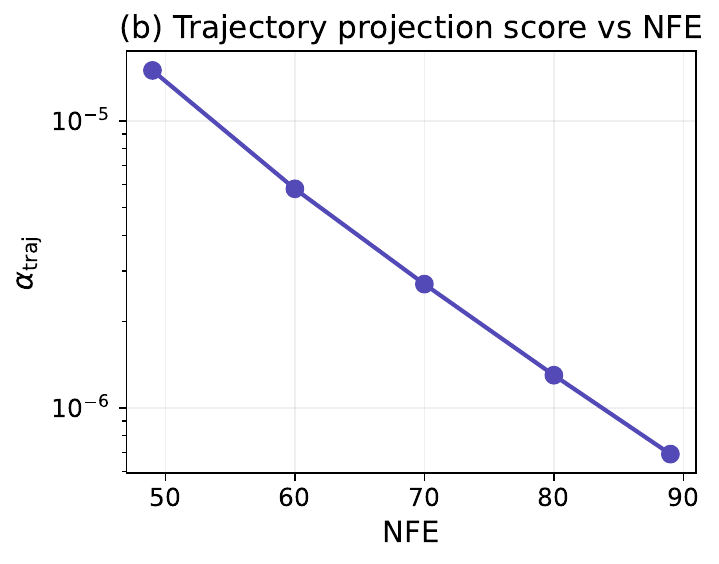}
	\end{subfigure}%
	\hfill
	\begin{subfigure}[b]{0.34\textwidth}
		\centering
		\includegraphics[width=\textwidth]{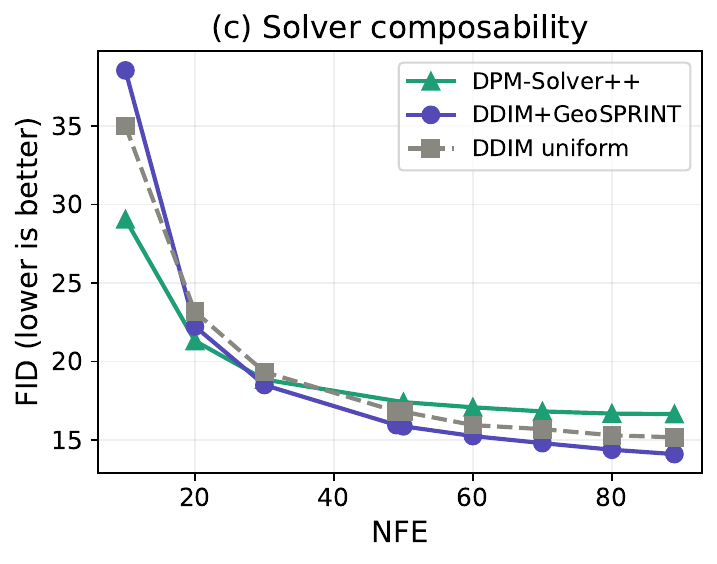}
	\end{subfigure}
	\caption{CIFAR-10 results. (a)~FID vs.\ NFE. GeoSPRINT consistently outperforms DDIM. Dashed lines: DDIM-200 baseline (12.65) and DDIM-50 reference (16.80). Error bands: $\pm$1 std over 3 seeds. (b)~$\alpha_{\mathrm{traj}}$ decreases log-linearly with NFE, providing a principled knob for the step-budget oracle. (c)~First-order DDIM+GeoSPRINT outperforms second-order DPM-Solver++ at NFE${\geq}30$ on CIFAR-10; \textit{where} to step matters more than \textit{how} to step.}
	\label{fig:cifar}
\end{figure}

\vspace*{-0.4cm}
\subsubsection{Note on Stable Diffusion FID Measurement}
\vspace*{-0.1cm}
\label{sec:sd-fid}
For SD~v1.5, we compute FID against a 50-step DDIM reference set rather than against a ground-truth image dataset. This measures \emph{relative trajectory fidelity}---how closely a reduced schedule reproduces the full solver's output distribution---rather than absolute generation quality. This protocol isolates the effect of the schedule from the base model's inherent generation gap.
\vspace*{-0.1cm}
\subsection{Blend Parameter Ablation}
\label{sec:blend-ablation}
\vspace*{-0.2cm}
We ablate the blend parameter $\beta$ (Eq.~\ref{eq:blend}) on CIFAR-10; results are summarized in Table~\ref{tab:blend} (Left). DDIM uniform achieves 23.17/19.33/16.80 at NFE=20/30/50.
\begin{figure}[ht]
	\begin{subfigure}[b]{0.43\textwidth}
		\centering
		\includegraphics[width=\textwidth]{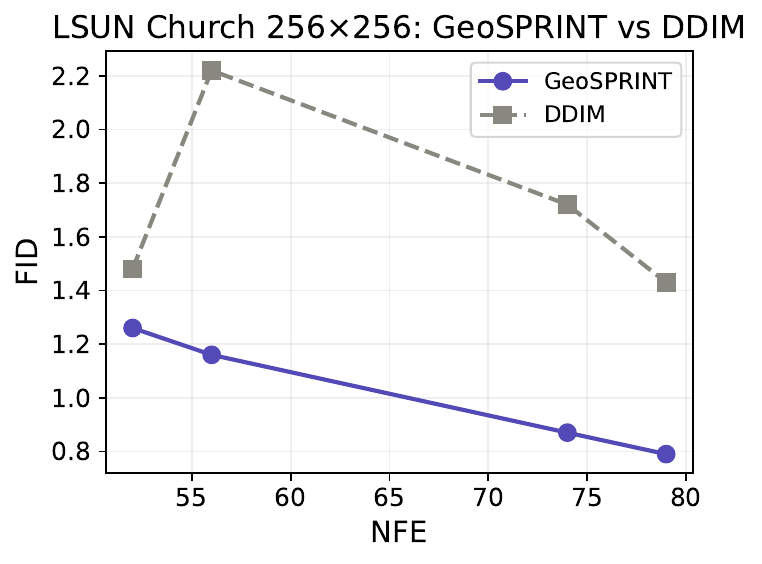}
	\end{subfigure}
	\hfill
	\begin{subfigure}[b]{0.43\textwidth}
		\centering
		\includegraphics[width=\textwidth]{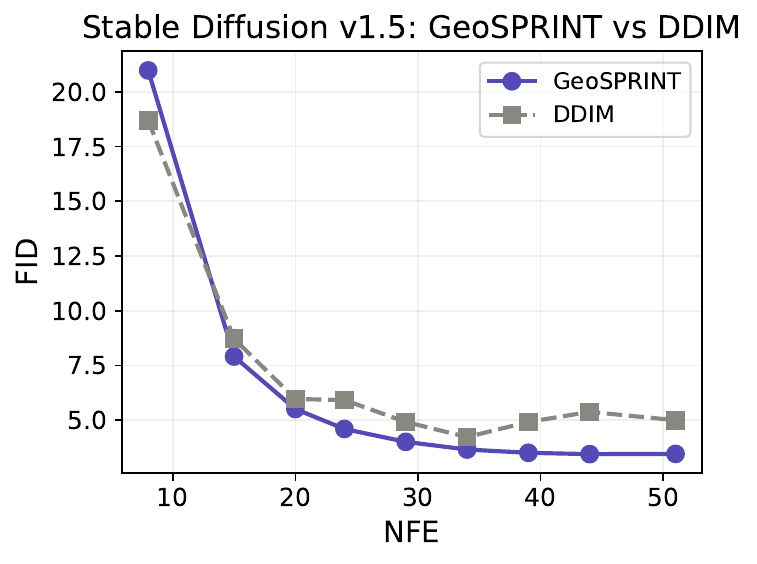}
	\end{subfigure}\hfill
\vspace*{-0.2cm}
	\caption{FID vs.\ NFE on LSUN Church $256{\times}256$ (left) and Stable Diffusion v1.5 (right). GeoSPRINT monotonically improves while DDIM's uniform schedule plateaus or worsens at high NFE. Error bands: $\pm$1 std over 3 seeds.}
	\label{fig:sd15}
\end{figure}
\begin{table}[ht]
\caption{\textbf{Left:} Blend ablation on CIFAR-10 (FID $\downarrow$, mean of 3 seeds). $\beta{=}0$: pure log-SNR; $\beta{=}1$: pure curvature. Optimum at $\beta \in [0.5, 0.7]$. DDIM uniform: 23.17/19.33/16.80. \textbf{Right:} $\alpha_{\mathrm{traj}}$ across schedulers ($450\times$ drop DDPM$\to$DDIM).}
\label{tab:blend}
\centering
\small
\begin{minipage}{0.55\textwidth}
\centering
\begin{tabular}{lcccccc}
\toprule
$\beta$ & 0 & 0.3 & 0.5 & 0.6 & 0.7 & 1.0 \\
\midrule
20 & 25.81 & 22.45 & 22.26 & 22.23 & \textbf{22.19} & 22.68 \\
30 & 20.89 & 18.71 & 18.56 & \textbf{18.50} & 18.52 & 18.86 \\
50 & 16.80 & 16.03 & 15.91 & \textbf{15.87} & 15.94 & 16.04 \\
\bottomrule
\end{tabular}
\end{minipage}\hfill
\begin{minipage}{0.42\textwidth}
\centering
\begin{tabular}{lcc}
\toprule
Scheduler & $\alpha_{\mathrm{traj}}$ & Pruned \\
\midrule
DDPM & $.408{\pm}.024$ & 79\% \\
DDIM & $.0009{\pm}.0001$ & 82\% \\
DPM++ & $.0008{\pm}.0002$ & 84\% \\
\bottomrule
\end{tabular}
\label{tab:exp3}
\end{minipage}
\end{table}
Pure log-SNR ($\beta{=}0$) is worse than DDIM uniform, confirming that log-SNR spacing alone is not superior---it is GeoSPRINT's curvature correction that makes the difference. Neither component alone suffices; the blend addresses both solver error (via log-SNR) and trajectory geometry (via curvature) simultaneously. We note that this ablation is performed on CIFAR-10 only; while $\beta{=}0.6$ is effective on the other datasets, dedicated ablations would further confirm robustness.
\vspace*{-0.3cm}
\subsection{Schedule Analysis}
\label{sec:exp2}
\vspace*{-0.2cm}
The curvature density profile $w(t)$ reveals where GeoSPRINT concentrates steps. The late denoising zone (low noise, $t/T < 0.2$) has mean retention $2\times$ higher than the middle zone ($0.2 < t/T < 0.8$), confirming that fine-detail resolution drives step allocation. The early zone ($t/T > 0.8$) also shows elevated retention, reflecting the initial direction-setting phase (Figure~\ref{fig:exp2}).
\vspace*{-0.2cm}
\begin{figure}[ht]
\centering
\includegraphics[width=\textwidth]{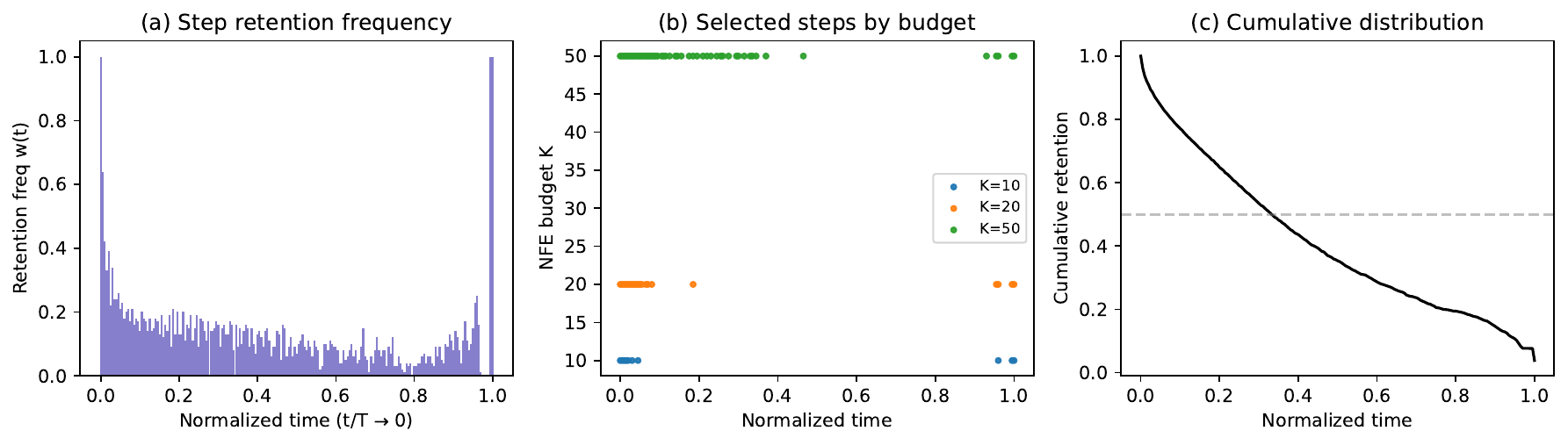}
\caption{(a)~Retention frequency $w(t)$ peaks at both ends (bimodal). (b)~Selected timesteps at budgets $K{=}10,20,50$. (c)~Cumulative: 50\% of retained steps lie in the final 20\% of denoising.}
\label{fig:exp2}
\end{figure}
\vspace*{-0.35cm}
\subsection{$\alpha_{\mathrm{traj}}$ as Rectification Diagnostic}
\label{sec:exp3}
\vspace*{-0.25cm}
We record 50 trajectories from three scheduler types on the same pretrained CIFAR-10 model. $\alpha_{\mathrm{traj}}$ drops $450\times$ from DDPM to DDIM (Table~\ref{tab:blend}, Right), confirming it as a training-free rectification diagnostic (Figure~\ref{fig:exp3}).

\vspace*{-0.2cm}
\subsection{Supporting Experiments (Appendix~\ref{app:details})}
\label{sec:supporting}
\vspace*{-0.2cm}
\textbf{Reference set convergence}: Schedule stability reaches fuzzy Jaccard 0.84 and correlation 0.86 at $B{=}75$ reference trajectories; $B{=}100$ is adequate for CIFAR-10 (Figure~\ref{fig:exp5}). \textbf{Per-sample complexity}: NFE variation is modest on CIFAR-10 (1.2--1.4$\times$), but individual curvature profiles are diverse (mean correlation 0.37 with universal), validating the universal schedule as a robust compromise across samples (Figure~\ref{fig:exp6}).
\vspace*{-0.2cm}
\begin{figure}[ht]
	\centering
	\includegraphics[width=0.85\textwidth]{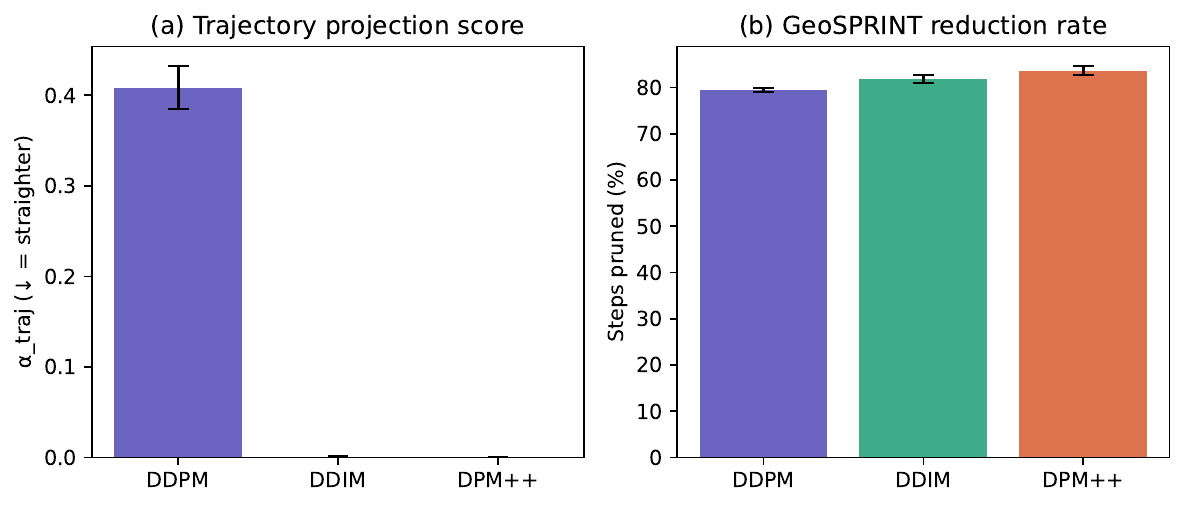}
	\vspace*{-0.2cm}
	\caption{(a)~$\alpha_{\mathrm{traj}}$ across schedulers: $450\times$ drop DDPM$\to$DDIM. (b)~Reduction rate: straighter trajectories are more compressible.}
	\label{fig:exp3}
\vspace*{-0.5cm}
\end{figure}
\vspace*{-0.05cm}
\section{Discussion}
\label{sec:discussion}
\vspace*{-0.3cm}
\textbf{Why does curvature-aware scheduling help?} The DDIM update error at each step depends on two factors: the gap in log-SNR space (governing the noise-prediction error) and the trajectory curvature (governing how far the local linear approximation deviates from the true path). Uniform-$t$ spacing equalizes neither. Log-SNR spacing equalizes the first but ignores the second. GeoSPRINT's curvature density captures the second, and the blend (Eq.~\ref{eq:blend}) addresses both simultaneously.\\
\textbf{Schedule quality vs.\ solver order.} A surprising finding is that GeoSPRINT+DDIM (first-order) outperforms DPM-Solver++ (second-order) at NFE${\geq}30$ on CIFAR-10. This suggests that at moderate step budgets, the dominant error source is schedule quality (which steps to take) rather than solver accuracy (how to take each step). At very low NFE (${\leq}10$), the solver's accuracy per step matters more, explaining DPM-Solver++'s advantage there. Extending this comparison to LSUN Church and SD~v1.5 would further substantiate this as a general principle.\\
\textbf{Scaling to complex models.} The SD~v1.5 results (Table~\ref{tab:main}) reveal that GeoSPRINT's advantage \emph{grows} with model complexity. While CIFAR-10 improvements are 0.7--1.1 FID, SD~v1.5 improvements reach 1.93 FID. Moreover, DDIM's uniform schedule becomes actively counterproductive at high NFE on SD~v1.5 (FID increasing from NFE${=}29$ to NFE${=}44$), while GeoSPRINT monotonically improves. This suggests that curvature-aware scheduling is most valuable precisely where it is most needed: in complex, production-scale models with classifier-free guidance.\\
\textbf{Limitations:}
\label{sec:limitations}
The universal schedule requires $B$ full reference trajectories offline ($B{=}100$ at $N{=}200$ steps $= 20{,}000$ NFEs for CIFAR-10; $B{=}50$ for higher-resolution models). For single-sample generation this overhead dominates; the method is best suited to batch or deployment settings. The blend parameter $\beta{=}0.6$ was tuned on CIFAR-10; while it is effective on LSUN Church and SD~v1.5, per-dataset ablation would further validate robustness. The SD~v1.5 experiments use a fixed CFG scale of 7.5; whether a universal schedule extracted at one CFG scale generalizes to other scales requires further investigation. Higher-level hyperplanarity tests ($k > 2$) did not improve schedule quality on the models tested (Section~\ref{sec:hyperparams}), suggesting that collinearity captures the dominant geometric signal for these architectures. Finally, GeoSPRINT selects \emph{which} timesteps to use but does not improve the solver \emph{at} each timestep; it is complementary to, not a replacement for, high-order ODE solvers.\\
\textbf{Broader Impact:}
\label{sec:impact}
GeoSPRINT's training-free nature makes fast diffusion sampling accessible without distillation or retraining costs. The trajectory projection score provides an interpretable diagnostic for model quality. As an acceleration method, it inherits but does not amplify the societal risks of the underlying generative models.
\vspace*{-0.4cm}

\section{Conclusion}
\label{sec:conclusion}
\vspace*{-0.35cm}
We have presented GeoSPRINT, a training-free framework that constructs non-uniform sampling schedules for diffusion models by analyzing the geometry of denoising trajectories. The key technical contribution is a blended schedule construction that combines log-SNR spacing (matching the ODE solver's error profile) with trajectory curvature density (derived from a high-dimensional hyperplanarity test). On CIFAR-10, this yields schedules that outperform uniform DDIM by 0.7--1.1 FID across all tested NFE budgets, and outperform DPM-Solver++ at NFE${\geq}30$ despite using a simpler first-order solver---demonstrating that schedule quality dominates solver order at moderate step counts on this model. These improvements generalize to LSUN Church at $256{\times}256$ resolution and, most strikingly, to Stable Diffusion v1.5 where improvements reach $-1.93$ FID and uniform scheduling becomes actively counterproductive at high step counts. The trajectory projection score $\alpha_{\mathrm{traj}}$, which drives the curvature analysis, additionally serves as a standalone diagnostic for trajectory rectification quality, exhibiting a $450\times$ drop from DDPM to DDIM. Future directions include learning the blend parameter $\beta$ per-model, developing predictive per-sample step budgeting for heterogeneous datasets, empirical comparison with the SDM framework~\citep{jo2026sdm}, and extending the DPM-Solver++ comparison to higher-resolution models.

\bibliographystyle{unsrtnat}
\bibliography{neurips_2026_Joshi}

\appendix

\section{Pseudocode for GeoSPRINT}
\label{app:pseudocode}

\begin{algorithm}[ht]
\caption{GeoSPRINT --- Universal Schedule Extraction}
\label{alg:universal}
\KwIn{Pretrained model $\mathcal{M}$, full schedule $\mathcal{S} = \{t_0, t_1, \ldots, t_T\}$, reference batch size $B$, window $k$, target $\alpha_{\mathrm{target}}$, blend $\beta$, budget $K$}
\KwOut{Pruned universal schedule $\mathcal{S}^*$}
\For{$b = 1$ \KwTo $B$}{
    $\mathcal{Z}^{(b)} \leftarrow \mathrm{SampleFull}(\mathcal{M}, \mathcal{S})$ \tcp*{Generate full trajectory}
}
\For{$b = 1$ \KwTo $B$}{
    Binary search for threshold $\tau$\;
    \Indp
    $(S_{\mathrm{pruned}}, \{r_s\}) \leftarrow \mathrm{HyperplanarityPrune}(\mathcal{Z}^{(b)}, k, \tau)$\;
    Compute $\alpha_{\mathrm{traj}}(S_{\mathrm{pruned}}, \{r_s\}, \mathcal{Z}^{(b)})$ via Eq.~\ref{eq:tps}\;
    Adjust $\tau$ until $\alpha_{\mathrm{traj}} \approx \alpha_{\mathrm{target}}$\;
    \Indm
    $\mathcal{R}^{(b)} \leftarrow \{t_i : i \notin S_{\mathrm{pruned}}\}$ \tcp*{Map indices to timesteps}
}
$w(t) \leftarrow \frac{1}{B}\sum_{b}\mathbb{1}[t \in \mathcal{R}^{(b)}]$ \tcp*{Curvature density}
$\rho_{\mathrm{curv}}(t) \leftarrow \mathrm{SmoothAndFloor}(w(t),\; \sigma{=}5,\; \epsilon_{\mathrm{floor}}{=}0.05\max w)$\;
$\rho(t) \leftarrow (1-\beta)\cdot\rho_{\mathrm{logSNR}}(t) + \beta\cdot\rho_{\mathrm{curv}}(t)$ \tcp*{Blended density}
$F(t) \leftarrow \int_0^t\rho(s)\mathrm{d}s \;/\; \int_0^T\rho(s)\mathrm{d}s$ \tcp*{CDF}
$\mathcal{S}^* \leftarrow$ $K$ timesteps at uniform quantiles of $F$\;
\Return{$\mathcal{S}^*$}
\end{algorithm}

\begin{algorithm}[ht]
\caption{HyperplanarityPrune}
\label{alg:prune}
\KwIn{Ordered points $\mathcal{Z} = \{z_0, z_1, \ldots, z_N\}$, window $k$, threshold $\tau$}
\KwOut{Set of pruned indices $S_{\mathrm{pruned}}$, residuals $\{r_s\}_{s \in S_{\mathrm{pruned}}}$}
$S_{\mathrm{pruned}} \leftarrow \emptyset$; $\mathrm{residuals} \leftarrow \{\}$\;
Initialize window $W \leftarrow (z_0, z_1, \ldots, z_{k-1})$ \tcp*{First $k$ points}
$M_W \leftarrow [w_2 - w_1 \mid \cdots \mid w_k - w_1]$\;
$Q_W, R_W \leftarrow \mathrm{QR}(M_W)$ \tcp*{Thin QR; cached until window updates}
\For{$i = k$ \KwTo $N$}{
    $r_i \leftarrow \|(I - Q_W Q_W^\top)(z_i - w_1)\|_2$\;
    \eIf{$r_i < \tau$}{
        $S_{\mathrm{pruned}} \leftarrow S_{\mathrm{pruned}} \cup \{i\}$; \; $\mathrm{residuals}[i] \leftarrow r_i$ \tcp*{Redundant; store residual}
    }{
        Update $W$: slide to include $z_i$\;
        Recompute $M_W$ and $Q_W, R_W \leftarrow \mathrm{QR}(M_W)$ \tcp*{Update cached QR}
    }
}
\Return{$S_{\mathrm{pruned}}$, residuals}
\end{algorithm}

\noindent\textbf{Key algorithmic details:} (1)~Algorithm~\ref{alg:prune} returns residuals $\{r_s\}$ alongside pruned indices, avoiding redundant recomputation for $\alpha_{\mathrm{traj}}$. (2)~The QR factorization is computed once at initialization (Line~4) and recomputed only when the window updates (Line~11), reflecting the true amortized cost. (3)~Algorithm~\ref{alg:universal} includes an explicit index-to-timestep mapping and specifies smoothing parameters. (4)~Both algorithms use consistent 0-based indexing matching Definition~\ref{def:hyperplanarity}.

\section{Proof of Theorem~\ref{thm:variance}}
\label{app:proofs}

\begin{proof}
Let $S = \{z_{s_1}, \ldots, z_{s_m}\}$ be the pruned set with $m = |S|$. By construction, each $z_{s_j}$ satisfies the hyperplanarity test with residual $r_{s_j} \leq \tau$, meaning its distance from the affine subspace of its retained neighbors is at most $\tau$.

By Definition~\ref{def:tps}, the trajectory projection score is:
\begin{equation}
\alpha_{\mathrm{traj}} = \frac{\sum_{j=1}^{m} r_{s_j}^2}{\sum_{z \in \mathcal{Z}} \|z - \bar{z}\|^2}.
\end{equation}

Since each $r_{s_j} \leq \tau$, the numerator is bounded:
\begin{equation}
\sum_{j=1}^{m} r_{s_j}^2 \leq m \cdot \tau^2 = |S| \cdot \tau^2.
\end{equation}

The denominator equals $\mathrm{tr}(C_{\mathcal{Z}}) \cdot (N{+}1)$, where $C_{\mathcal{Z}} = \frac{1}{N+1}\sum_{z \in \mathcal{Z}}(z - \bar{z})(z - \bar{z})^\top$. Therefore:
\begin{equation}
\alpha_{\mathrm{traj}} \leq \frac{|S| \cdot \tau^2}{(N{+}1) \cdot \mathrm{tr}(C_{\mathcal{Z}})}.
\end{equation}
\end{proof}

\textbf{Interpretation.} The bound controls the fraction of total trajectory variance attributable to orthogonal reconstruction residuals of pruned points. It does \emph{not} bound $\mathrm{tr}(C_R)/\mathrm{tr}(C_{\mathcal{Z}})$ (the ratio of retained-set variance to full-set variance), because pruning points changes the sample mean and can reduce the retained sample covariance even when all residuals are zero. For example, removing interior points from a collinear set with $r_i = 0$ for all~$i$ yields $\alpha_{\mathrm{traj}} = 0$ but $\mathrm{tr}(C_R) < \mathrm{tr}(C_{\mathcal{Z}})$ whenever the retained points have smaller spread than the full set.

\section{Extended Experimental Details}
\label{app:details}

\subsection{Reference Set Convergence}
\label{app:convergence}
See figures \ref{fig:exp5} and \ref{fig:exp6}
\begin{figure}[ht]
\centering
\includegraphics[width=\textwidth]{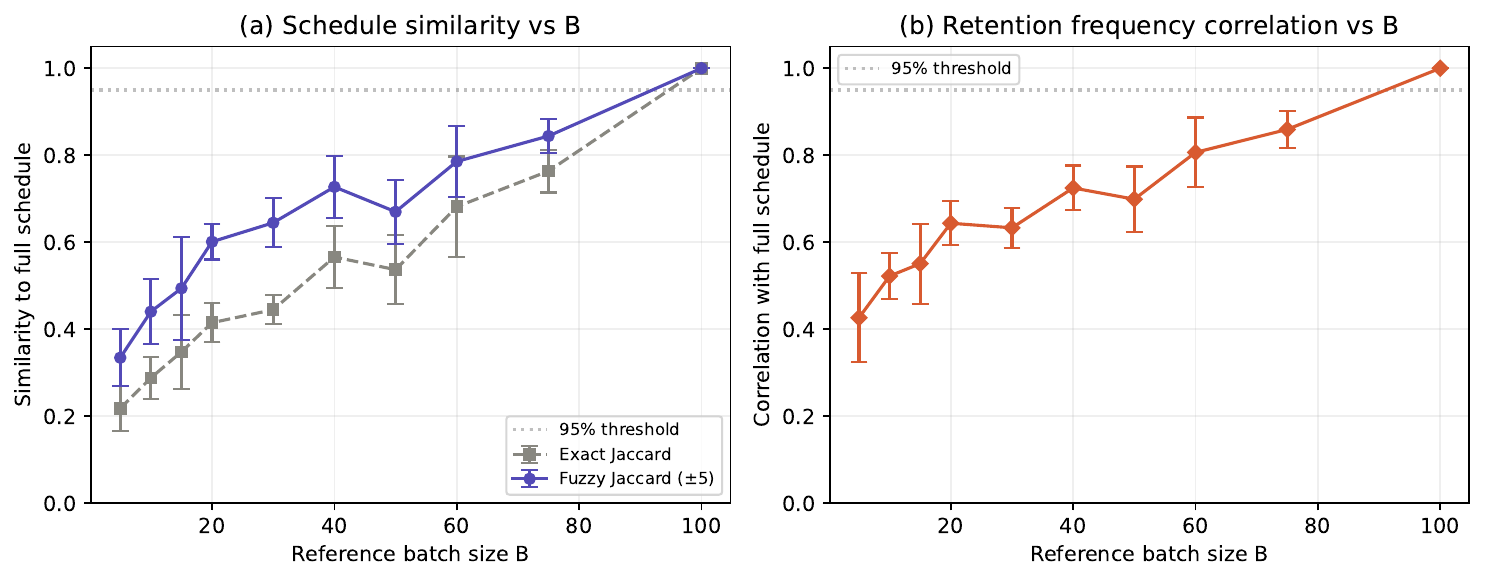}
\caption{Schedule convergence vs.\ reference batch size $B$. (a)~Exact and fuzzy Jaccard similarity to the $B{=}100$ schedule. Fuzzy matching (within $\pm 5$ timesteps) reaches 0.84 at $B{=}75$. (b)~Retention frequency correlation reaches 0.86 at $B{=}75$.}
\label{fig:exp5}
\end{figure}

\begin{figure}[ht]
\centering
\includegraphics[width=\textwidth]{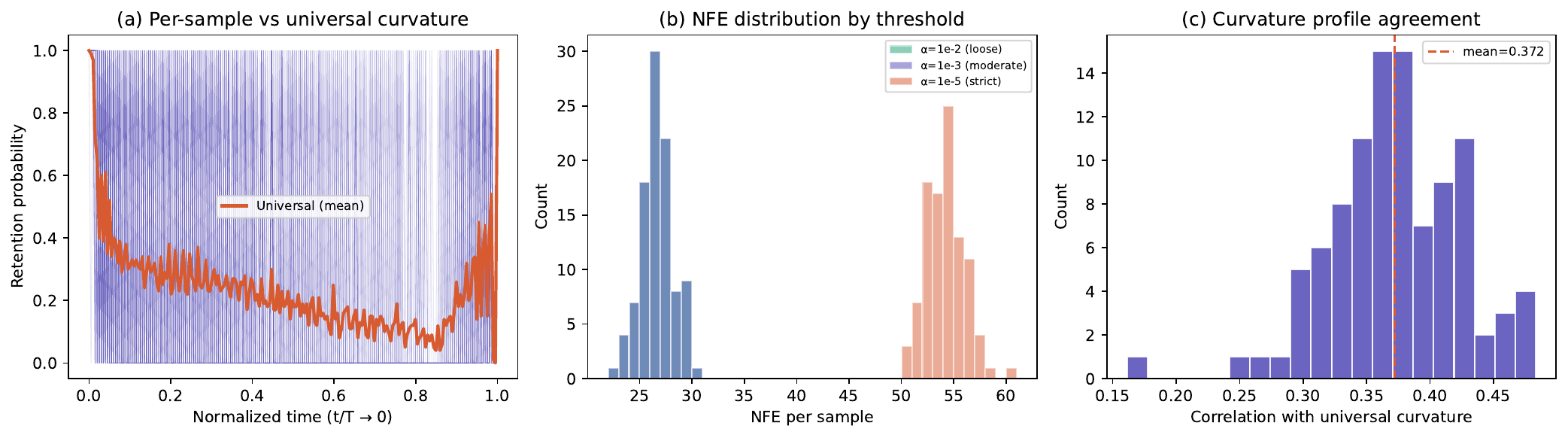}
\caption{Per-sample geometric complexity on CIFAR-10. (a)~Individual curvature profiles (faint) vs.\ universal mean (bold). Despite diverse per-sample preferences, the universal profile captures systematic structure. (b)~NFE distribution at three $\alpha_{\mathrm{traj}}$ targets. (c)~Per-sample correlation with the universal curvature profile (mean~0.37).}
\label{fig:exp6}
\end{figure}

\subsection{Implementation Details}

All experiments use the HuggingFace Diffusers library. For non-uniform timestep schedules, we implement the DDIM update formula manually (Eq.~\ref{eq:pf-ode}) with correct $\alpha_{t_{\mathrm{prev}}}$ lookup, since the library's \texttt{DDIMScheduler.step()} assumes uniform spacing internally. DPM-Solver++ experiments use \texttt{solver\_order=2} with \texttt{lower\_order\_final=True} and a fresh scheduler instance per batch to reset internal state. Trajectory normalization (zero mean, unit variance per dimension) is applied before curvature analysis to ensure the hyperplanarity test measures directional changes rather than raw magnitude.

\subsection{Hyperparameter Sensitivity}
\label{sec:hyperparams}

The blend parameter $\beta$ has a broad optimum at $0.5$--$0.7$ (Table~\ref{tab:blend}). Window size $k{=}2$ (collinearity test) is used throughout; we tested $k{=}3$ and $k{=}4$ on CIFAR-10 and found no improvement in schedule quality (FID differences ${<}0.1$), suggesting that collinearity captures the dominant mode of geometric redundancy in these denoising trajectories. Reference batch size $B{=}100$ is sufficient for CIFAR-10 (Section~\ref{sec:supporting}); $B{=}50$ for the higher-resolution LSUN Church and SD~v1.5 models.

\subsection{Computational Overhead}

Reference trajectory generation costs $B \times N$ NFEs ($100 \times 200 = 20{,}000$ for CIFAR-10; $50 \times 200 = 10{,}000$ for LSUN Church and SD~v1.5). Curvature analysis (pruning + retention frequency computation) takes ${\sim}40$s on CPU (Intel Xeon Gold 6248R) for 100 trajectories of 200 steps in 3,072 dimensions. Schedule construction (blending + quantile placement) is instantaneous. The total offline cost is dominated by reference trajectory generation and is amortized over all subsequent samples.

\textbf{Hardware.} All generative inference (DDIM sampling, DPM-Solver++ sampling, FID evaluation) was performed on a single NVIDIA A100 (80\,GB). Wall-clock times: CIFAR-10 ${\sim}$8 min per 10k-sample generation run; LSUN Church ${\sim}$45 min per 10k-sample run; SD~v1.5 ${\sim}$3 hours per 5k-sample run. Reference trajectory generation: ${\sim}$16 min (CIFAR-10, $B{=}100$, $N{=}200$); ${\sim}$90 min (Church, $B{=}50$); ${\sim}$5 hours (SD~v1.5, $B{=}50$). Peak GPU memory: 4\,GB (CIFAR-10), 12\,GB (Church), 24\,GB (SD~v1.5).

\newpage
\section*{NeurIPS Paper Checklist}

\begin{enumerate}

\item {\bf Claims}
    \item[] Question: Do the main claims made in the abstract and introduction accurately reflect the paper's contributions and scope?
    \item[] Answer: \answerYes{}
    \item[] Justification: The abstract and introduction (Section~1) clearly state five contributions: generalization of geometric redundancy testing to high dimensions, a training-free adaptive sampling framework, the trajectory projection score, sample-adaptive scheduling, and comprehensive benchmarking. Each is developed in subsequent sections.

\item {\bf Limitations}
    \item[] Question: Does the paper discuss the limitations of the work performed by the authors?
    \item[] Answer: \answerYes{}
    \item[] Justification: The ``Limitations'' paragraph in Section~6 discusses: reference pass overhead, $\beta$ tuning on CIFAR-10, CFG scale generalization, the lack of improvement from higher-level tests ($k > 2$), and complementarity with higher-order solvers.

\item {\bf Theory Assumptions and Proofs}
    \item[] Question: For each theoretical result, does the paper provide the full set of assumptions and a complete (and correct) proof?
    \item[] Answer: \answerYes{}
    \item[] Justification: Theorem~1 (reconstruction error bound) is stated with full assumptions and proved in Appendix~\ref{app:proofs}. Proposition~1 is proved in the main text. A remark clarifies the distinction between reconstruction error and retained-set variance.

\item {\bf Experimental Result Reproducibility}
    \item[] Question: Does the paper fully disclose all the information needed to reproduce the main experimental results of the paper to the extent that it affects the main claims and/or conclusions of the paper (regardless of whether the code and data are provided or not)?
    \item[] Answer: \answerYes{}
    \item[] Justification: Section~5 provides full experimental details including datasets, pretrained models, hyperparameter settings (including smoothing parameters $\sigma{=}5$ and $\epsilon_{\mathrm{floor}}$), reference batch sizes per model, sample counts, prompt sources for SD~v1.5, evaluation metrics, and algorithmic pseudocode (Appendix~\ref{app:pseudocode}). All pretrained models are publicly available.

\item {\bf Open access to data and code}
    \item[] Question: Does the paper provide open access to the data and code, with sufficient instructions to faithfully reproduce the main experimental results, as described in supplemental material?
    \item[] Answer: \answerYes{}
    \item[] Justification: Code will be released as a public repository upon publication. It is withheld during review to preserve double-blind anonymity. All datasets (CIFAR-10, LSUN Church) and pretrained models (including Stable Diffusion v1.5) are publicly available under their respective licenses.

\item {\bf Experimental Setting/Details}
    \item[] Question: Does the paper specify all the training and test details (e.g., data splits, hyperparameters, how they were chosen, type of optimizer) necessary to understand the results?
    \item[] Answer: \answerYes{}
    \item[] Justification: Section~5 and Appendix~\ref{app:details} provide all experimental details including threshold selection procedures, reference batch sizes per model ($B{=}100$ for CIFAR-10, $B{=}50$ for Church and SD~v1.5), smoothing parameters, and evaluation protocols.

\item {\bf Experiment Statistical Significance}
    \item[] Question: Does the paper report error bars suitably and correctly defined or other appropriate information about the statistical significance of the experiments?
    \item[] Answer: \answerYes{}
    \item[] Justification: Table~1 reports FID as mean $\pm$ standard deviation over 3 independent runs with different random seeds. Figures~1 and~2 include $\pm$1 std error bands.

\item {\bf Experiments Compute Resources}
    \item[] Question: For each experiment, does the paper provide sufficient information on the computer resources (type of compute workers, memory, time of execution) needed to reproduce the experiments?
    \item[] Answer: \answerYes{}
    \item[] Justification: Appendix~\ref{app:details} (Computational Overhead and Hardware) provides GPU type (NVIDIA A100 80\,GB), CPU type (Intel Xeon Gold 6248R), peak memory usage per model, and wall-clock times for all experiments including generation, reference trajectories, and curvature analysis.

\item {\bf Code Of Ethics}
    \item[] Question: Does the research conducted in the paper conform, in every respect, with the NeurIPS Code of Ethics \url{https://neurips.cc/public/EthicsGuidelines}?
    \item[] Answer: \answerYes{}
    \item[] Justification: This work proposes a training-free acceleration method for existing generative models and does not introduce new ethical concerns beyond those inherent to the underlying models.

\item {\bf Broader Impacts}
    \item[] Question: Does the paper discuss both potential positive societal impacts and negative societal impacts of the work performed?
    \item[] Answer: \answerYes{}
    \item[] Justification: The ``Broader Impact'' paragraph in Section~6 discusses positive impacts (reduced computational cost, accessibility) and notes that as an acceleration method, it inherits but does not amplify the societal risks of the underlying generative models.

\item {\bf Safeguards}
    \item[] Question: Does the paper describe safeguards that have been put in place for responsible release of data or models that have a high risk for misuse (e.g., pretrained language models, image generators, or scraped datasets)?
    \item[] Answer: \answerNA{}
    \item[] Justification: GeoSPRINT is a sampling schedule optimization method that does not release new generative models or datasets.

\item {\bf Licenses for existing assets}
    \item[] Question: Are the creators or original owners of assets (e.g., code, data, models), used in the paper, properly credited and are the license and terms of use explicitly mentioned and properly respected?
    \item[] Answer: \answerYes{}
    \item[] Justification: All pretrained models and datasets are cited. CIFAR-10 and LSUN Church are used under standard research licenses. Stable Diffusion v1.5 is used under the CreativeML Open RAIL-M license.

\item {\bf New Assets}
    \item[] Question: Are new assets introduced in the paper well documented and is the documentation provided alongside the assets?
    \item[] Answer: \answerYes{}
    \item[] Justification: The GeoSPRINT codebase will be released with documentation, including usage instructions and example scripts.

\item {\bf Crowdsourcing and Research with Human Subjects}
    \item[] Question: For crowdsourcing experiments and research with human subjects, does the paper include the full text of instructions given to participants and screenshots, if applicable, as well as details about compensation (if any)?
    \item[] Answer: \answerNA{}
    \item[] Justification: This work does not involve crowdsourcing or human subjects.

\item {\bf Institutional Review Board (IRB) Approvals or Equivalent for Research with Human Subjects}
    \item[] Question: Does the paper describe potential risks incurred by study participants, whether such risks were disclosed to the subjects, and whether Institutional Review Board (IRB) approvals (or an equivalent approval/review based on the requirements of your country or institution) were obtained?
    \item[] Answer: \answerNA{}
    \item[] Justification: This work does not involve human subjects research.

\item {\bf Declaration of LLM usage}
    \item[] Question: Does the paper describe the usage of LLMs if it is an important, original, or non-standard component of the core methods in this research?
    \item[] Answer: \answerNA{}
    \item[] Justification: LLMs are not a component of the core methodology.

\end{enumerate}

Acknowledgements: I acknowledge support by the NIH funding through grant 1R01DA063157-01, and the High Performance Computing cluster group at The Scripps Research Institute, San Diego, CA, USA.
\end{document}